\crefname{figure}{Figure}{Figures}
\crefname{equation}{}{}
\crefname{table}{Table}{Tables}
\crefname{section}{Section}{Sections}
\crefname{algocf}{Algorithm}{Algorithms}
\Crefname{algocf}{Algorithm}{Algorithms}
\newcommand{\control}{\mathbf{u}}
\newcommand{\horizon}{T}
\newcommand{\state}{\mathbf{x}}
\newcommand{\noise}{\boldsymbol{\epsilon}}
\newcommand{\statenoise}{\noise^{\state}}
\newcommand{\paramsnoise}{\noise^{\params}}
\newcommand{\noisecov}{\mathbf{P}}
\newcommand{\paramsnoisecov}{\noisecov^{\params}}
\newcommand{\statenoisecov}{\noisecov^{\state}}
\newcommand{\paramsdynamics}{\mathbf{g}}
\newcommand{\learningfn}{\mathbf{\ell}}
\newcommand{\dynamics}{\mathbf{f}}
\newcommand{\dynjac}{\hat{\mathbf{F}}}
\newcommand{\kalmangain}{\mathbf{K}}
\newcommand{\innovationcov}{\mathbf{S}}
\newcommand{\innovation}{\mathbf{s}}
\newcommand{\dimstate}{n_{\state}}
\newcommand{\dimcontrol}{n_{\control}}
\newcommand{\dimparams}{n_{\params}}
\newcommand{\cost}{J}
\newcommand{\mutualinfo}{I}
\newcommand{\directedinfo}{I}
\newcommand{\entropy}{h}
\newcommand{\policy}{\mathbf{\pi}}
\newcommand{\timestart}{t}
\newcommand{\timeindex}{i}
\newcommand{\params}{\boldsymbol{\theta}}
\newcommand{\belief}{\boldsymbol{{\vartheta}}}
\newcommand{\beliefmean}{\boldsymbol{\bar{\belief}}}
\newcommand{\beliefcov}{\boldsymbol{\boldsymbol{\Sigma}}}
\newcommand{\beliefdynamics}{\hat{\learningfn}}
\newcommand{\beliefupdater}{\boldsymbol{b}}
\newcommand{\observation}{\mathbf{o}}
\newcommand{\observationfn}{\mathbf{q}}
\newcommand{\dimobs}{n_{\observation}}
\newcommand{\observationpredicted}{\hat{\observation}}
\newcommand{\obsjac}{\mathbf{Q}}
\newcommand{\beliefpredicted}{\hat{\belief}}
\newcommand{\statepredicted}{\hat{\state}}
\newcommand{\innovationcovpredicted}{\hat{\innovationcov}}
\newcommand{\paramsprocjac}{\mathbf{G}}
\newcommand{\noisepredicted}{\hat{\noise}}
\newcommand{\paramsQ}{\mathbf{Q}}
\newcommand{\paramsR}{\mathbf{R}}
\newcommand{\paramsA}{\mathbf{A}}
\newcommand{\paramsB}{\mathbf{B}}
\newcommand{\dimbelief}{n_{\belief}}
\crefname{algocf}{Algorithm}{Algorithms}
\Crefname{algocf}{Algorithm}{Algorithms}
\definecolor{myorange}{HTML}{FEAE00}
\newcounter{contributionnum}
\newcommand{\contribution}[2][]{%
  \refstepcounter{contributionnum}%
  \begin{tcolorbox}[
    colback=myorange!10, 
    colframe=myorange,
    title={\textbf{Contribution \thecontributionnum:} #1},
    coltitle=black,
    fonttitle=\bfseries
  ]
    #2
  \end{tcolorbox}%
}
\begin{document}
\title{Generalized Information Gathering Under Dynamics Uncertainty}
%
%
\author{Fernando Palafox\inst{1}\orcidID{0009-0005-0836-4663} \and
Jingqi Li\inst{1}\orcidID{0000-0002-3731-3807} \and
Jesse Milzman\inst{2}\orcidID{0000-0003-4937-8912} \and
David Fridovich-Keil \inst{1}\orcidID{0000-0002-5866-6441}}
\authorrunning{F. Palafox et al.}
%
\institute{University of Texas at Austin, Texas, USA\\ \email{\{fernandpalafox,dfk\}@utexas.edu, jingqi.li@austin.utexas.edu} \and
DEVCOM Army Research Laboratory, New York, USA \\
\email{jesse.m.milzman.civ@army.mil}}
\maketitle              
%


\begin{abstract}
    An agent operating in an unknown dynamical system must learn its dynamics from observations. 
    Active information gathering accelerates this learning, but existing methods derive bespoke costs for specific modeling choices: dynamics models, belief update procedures, observation models, and planners. 
    We present a unifying framework that decouples these choices from the information-gathering cost by explicitly exposing the causal dependencies between parameters, beliefs, and controls. 
    Using this framework, we derive a general information-gathering cost based on Massey's directed information \cite{massey1990causality} that assumes only Markov dynamics with additive noise and is otherwise agnostic to modeling choices. 
    We prove that the mutual information cost used in existing literature is a special case of our cost. 
    Then, we leverage our framework to establish an explicit connection between the mutual information cost and information gain in linearized Bayesian estimation, thereby providing theoretical justification for mutual information-based active learning approaches. 
    Finally, we illustrate the practical utility of our framework through experiments spanning linear, nonlinear, and multi-agent systems.
    \keywords{Information gathering  \and Dynamics uncertainty \and Sequential decision making}
\end{abstract}

\begin{figure}[ht]
\centering
\includegraphics[width=1.0\linewidth]{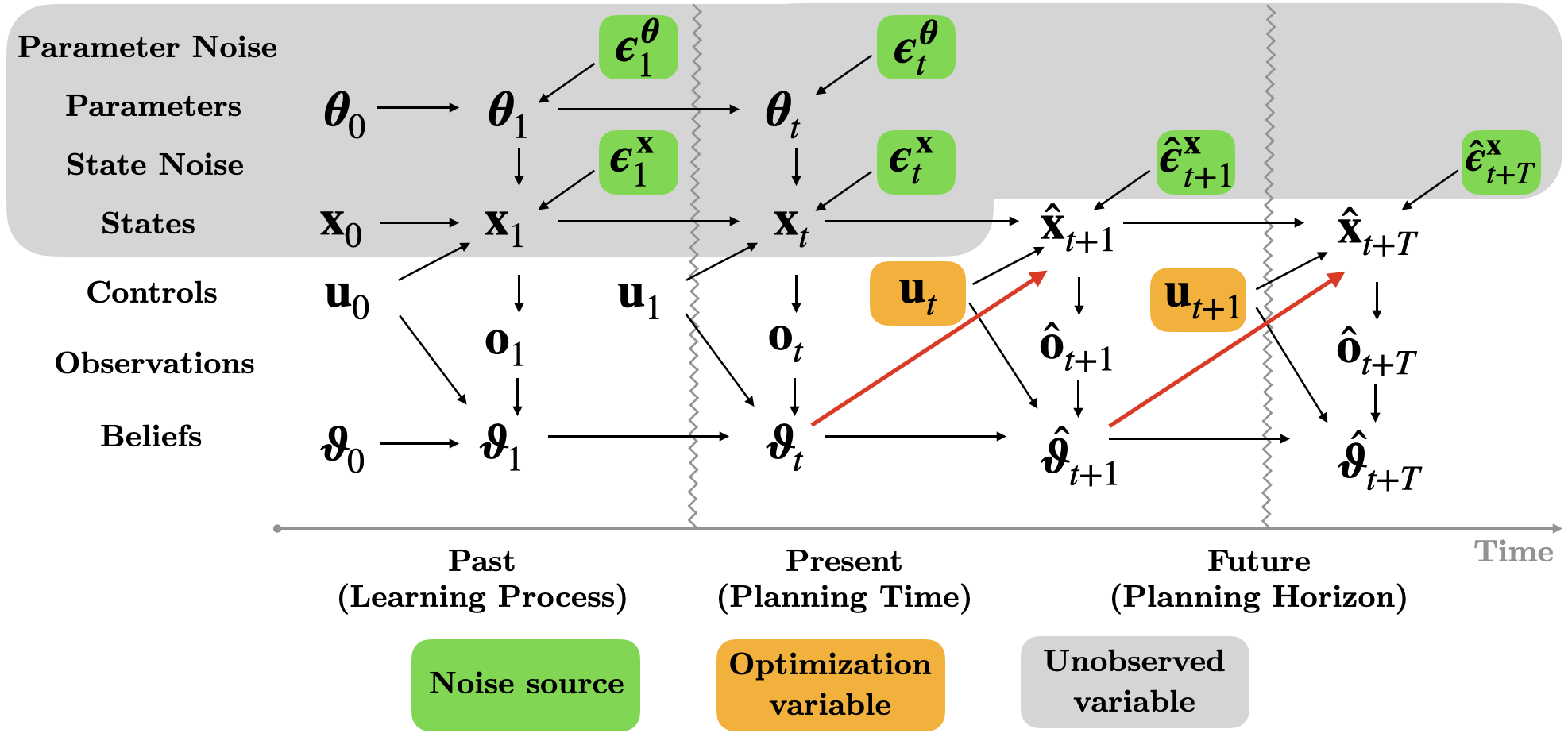}
\caption{Causal graph for states $\state_\timeindex$, observations $\observation_\timeindex$, controls $\control_\timeindex$, parameters $\params_\timeindex$, beliefs $\belief_\timeindex$, and noise $\noise_\timeindex$. 
Planning is done at time $\timestart$ for a planning horizon $\horizon$. 
Zigzag lines denote axis breaks.
A variable with a hat denotes a prediction made during planning.
Red lines emphasize the causal dependencies between states and beliefs during prediction: since the true parameters are not available, the states must be predicted using the latest dynamics parameters belief.}
\label{fig:causal}
\vspace{-10pt}
\end{figure}

\section{Introduction}
An agent operating in an unknown dynamical system must learn it from observations. 
It can do so passively, by simply collecting data as it pursues other cost, or actively, by seeking maximally informative observations. 
In this paper we focus on the problem of active information gathering, which is important because collecting information-dense observations reduces time, energy, and risk during exploration.

The problem of information gathering is hard for two reasons. 
First, it requires evaluating complex nested expectations to compute how future uncertainty changes as a function of planned actions \cite{bar1974dual,klenske2016dual}.
Second, existing formalizations do not cleanly expose the causal dependencies between beliefs, controls, and states. 
Consequently, existing methods rely upon information-gathering costs derived from specific problem structures. 

In this work, we present a unifying framework that cleanly exposes the causal dependencies between parameters, states, controls, observations, beliefs, and predicted beliefs (see \cref{fig:causal} for a visualization) \textit{without} committing to a specific dynamics model type, belief update procedure, belief dynamics model, observation model, or planning procedure \textbf{(Contribution 1, \cref{sec:contribution1})}.
Then we derive a general information-gathering cost based on directed information \cite{massey1990causality}, which we prove captures existing approaches based on mutual information, cf. \cite{sukhija2023optimistic, buisson2020actively,lew2022safe,krause2007nonmyopic,zimmer2018safe,abraham2019active,vantilborgh2025dual,davydov2024first, palafox2025info} for examples, \textbf{(Contribution 2, \cref{sec:contribution2})}.
Equipped with this framework, we derive an explicit connection between information gathering and uncertainty reduction in linearized Bayesian estimation, providing a measure of theoretical justification for mutual information that prior frameworks could not easily expose 
\textbf{(Contribution 3, \cref{sec:contribution3})}.
Finally, we present experiments on single-agent and multi-agent settings, showcasing the practical utility of our theoretical framework \textbf{(Contribution 4, \cref{sec:contribution4})}.
Moving forward, our framework opens the door for new combinations of learning and planning components that prior formalisms could not express.

\section{Related Work}
\label{sec:related}

\textbf{Information Gathering Under Dynamics Uncertainty.}
The problem of selecting actions that lead towards informative observations is typically formalized by defining a distribution (referred to as ``belief'') over the unknown dynamics parameters and maximizing Shannon's mutual information~\cite{shannon1948mathematical,cover1999elements} between parameters and observations, with abundant literature in the areas of Bayesian experimental design~\cite{rainforth2024modern} and active learning~\cite{settles2009active,cohn1994improving,zhan2022comparative}.
These approaches address which observations are informative, and they often assume the ability to directly select or sample observations. 
In contrast, we focus on embodied agents operating in dynamical systems where observations cannot be chosen directly but emerge from the system's state evolution. Therefore, the agents must plan control sequences that steer the system through state space regions that yield informative observations.

Approaches to this problem have modeled the unknown dynamics with Gaussian processes~\cite{sukhija2023optimistic, buisson2020actively, krause2007nonmyopic, krause2008near, zimmer2018safe}, Koopman operators~\cite{abraham2019active}, physics-based models~\cite{vantilborgh2025dual}, and neural networks~\cite{davydov2024first, lew2022safe, palafox2025info}.
Unlike these existing works, which derive bespoke information-gathering costs tailored to specific dynamics models or belief update procedures, our framework requires only the minimal structural assumption of Markov dynamics with additive noise.
We also make no assumptions about the specific dynamics model type or belief updating procedure, e.g., a Bayesian filter.
To this end, we derive a general information-gathering cost based on directed information~\cite{massey1990causality}, and in Theorem~\ref{thm:mi_special_case} prove that mutual information, the cost most commonly used in the literature, is a special case of our formulation.

\textbf{Belief-Space Planning and Dual Control.}
Closely related to our work is the literature on belief-space planning, partially observable Markov decision processes (POMDPs)~\cite{kaelbling1998planning, platt2010belief, van2012motion, van2016motion, patil2015scaling}, and dual control~\cite{feldbaum1960dual,mesbah2018stochastic}.
These approaches primarily address \emph{state} uncertainty, whereas we focus on \emph{dynamics} uncertainty.
The theoretical difficulty of belief-space planning is well-documented: the nonlinearity of Bayesian belief updates makes exact solutions intractable even for simple systems~\cite{klenske2016dual, bar1974dual, wittenmark1995adaptive}.

We reference the literature on belief-space planning because it faces the same computational difficulties as us: uncertainty propagation is challenging, and existing approaches derive information gathering solutions specific to particular dynamics models and belief updating procedures.
Our framework addresses this by explicitly exposing causal dependencies between beliefs, controls, and observations, allowing us to derive a general information-gathering cost that can be instantiated for different dynamics models and belief updating procedures without requiring problem-specific derivations.

\textbf{Multi-Agent Information Gathering.}
Our framework naturally extends to information gathering for a single agent operating within a multi-agent environment.
The problem of uncertainty in multi-agent interactions is explored in the literature on Bayesian games~\cite{harsanyi1967games,osborne2004introduction,shoham2008multiagent}.
Most of this work focuses on optimal decision-making under fixed uncertainty; few works directly examine how to actively \emph{reduce} that uncertainty.
An exception is the literature on deception~\cite{hespanha2000deception, rostobaya2023deception, xu2016modeling, fuchs2012sequential, vurankaya2025deceptive, palafox2024smooth}, which examines how agents strategically manipulate opponents' beliefs in adversarial settings. 
These works focus on controlling what others learn about the ego agent, whereas we focus on how the ego agent can actively learn about others. 
Additionally, we address continuous-state, continuous-action multi-agent dynamical systems with explicit dynamics parameter learning.

In multi-agent robotic applications, information-gathering actions have been explored in autonomous driving~\cite{sadigh2016information, hu2024active, knaup2024active, knaup2025dual, gadginmath2025active, wang2023active, sagheb2025unified}, drone racing~\cite{schwarting2021stochastic}, and pursuit-evasion games~\cite{krusniak2025online}.
These methods build upon belief-space planning for POMDPs~\cite{kaelbling1998planning, platt2010belief, van2012motion, van2016motion, patil2015scaling, gmytrasiewicz2004interactive}, where agents reason about how uncertainty will evolve to select controls that reduce it~\cite{taylor2021active}.
In this work, we consider scenarios where an ego agent actively reduces uncertainty about other agents' parameters. 
Our contribution in the multi-agent setting is demonstrating that our general framework applies directly to multi-agent settings without requiring multi-agent-specific derivations.

\section{A Modular Framework for Sequential Decision-Making Under Dynamics Uncertainty}
\label{sec:contribution1}

We consider agents that sequentially plan, execute, and learn the unknown dynamics parameters.
At each step, the agent solves an optimization problem using a learned dynamics model (model-based control), executes the first action, receives an observation, and immediately updates its model (online learning).

\subsection{Execution}

At execution time $\timestart$, an agent in an environment with state $\state_{\timestart} \in \mathbb{R}^{\dimstate}$ applies control $\control_\timestart \in \mathbb{R}^{\dimcontrol}$, which results in state $\state_{\timestart+1}$.
In this work we assume the environment is a stochastic, Markov dynamical system
\begin{equation}
\label{eq:dynamical_system}
    \state_{\timeindex+1} = \dynamics(\state_\timeindex, \control_\timeindex, \params_{\timeindex+1}) + \statenoise_\timeindex, \quad \forall \timeindex \ge \timestart,
\end{equation}
where $\dynamics$ is a known, possibly nonlinear function, $\params_i \in \mathbb{R}^{\dimparams}$ is a set of (unknown) ground-truth parameters, and $\statenoise_\timeindex$ is noise drawn as follows: $\statenoise_\timeindex \sim \mathcal{N}(\mathbf{0}, \statenoisecov), \statenoisecov \succeq 0$.
We assume the parameters $\params_{\timeindex+1}$ evolve according to the parameter dynamics
\begin{equation}
    \label{eq:process_model}
    \params_{\timeindex+1} = \paramsdynamics(\params_{\timeindex}) + \paramsnoise_{\timeindex}, 
\end{equation}
where $\paramsdynamics$ is a known, possibly nonlinear function independent of state or controls, $\paramsnoise_\timeindex \sim \mathcal{N}(\mathbf{0}, \paramsnoisecov), \paramsnoisecov \succeq 0$.
For brevity, we refer the set of all noise as $\noise = \{\statenoise, \paramsnoise\}$.

The agent does not observe the state $\state_{\timeindex+1}$. 
Instead, it receives an observation $\observation_{\timeindex+1} \in \mathbb{R}^{\dimobs}$ which is a known function of the state $\state_{\timeindex+1}$, i.e., 
\begin{equation}
    \observation_{\timeindex+1} = \observationfn(\state_{\timeindex+1})
\end{equation}

Note how $\state_{\timeindex+1}$ is a function of $\params_{\timeindex+1}$ (and not $\params_\timeindex$) in \cref{eq:dynamical_system} and \cref{fig:causal}.
Using this convention, the observation $\observation_{\timeindex+1}$ informs the belief update about the parameters $\params_{\timeindex+1}$

\subsection{Online Dynamics Parameter Learning}

The agent uses the observation $\observation_\timeindex$ to update a belief $\belief_\timeindex \in \mathbb{R}^{\dimbelief}$  over the parameters, i.e., what it ``thinks'' the ground-truth parameters are.
In this paper, the belief is only over the dynamics parameters (not state). 
$\belief_\timeindex$ may correspond to a point estimate or a distribution over the parameters.
For a point estimate, $\dimbelief = \dimparams$. 
For a distribution, $\dimbelief$ depends on how the distribution is defined. 
E.g., if $\belief_\timeindex$ is a Gaussian then $\dimbelief = \dimparams + \dimparams^2$, where the terms correspond to the dimensions of the mean and the covariance, respectively.

We define the \textit{learning process} $\learningfn$ as a procedure that combines 1) the propagation of the ground-truth parameters and state, and 2) the agent's update to its belief.
Formally, it is denoted as:
\begin{equation}
\label{eq:learning_process}
    \belief_{\timeindex+1} = \learningfn(\belief_\timeindex, \state_\timestart, \params_{\timestart}, \noise_{\timestart:\timestart+i}, \control_{\timestart:\timestart + \timeindex})
\end{equation}
where $\learningfn$ is shown in \cref{alg:general_learning}.
This process is labeled as ``past'' in \cref{fig:causal}.

The learning process is causal (i.e., filtering) such that $\belief_{\timeindex+1}$ only depends on the preceding belief, and the noise and control sequences.
This models the way an agent will sequentially process information as it receives it.

Within the learning process (line 5 of \cref{alg:general_learning}) a \emph{belief updater} $\beliefupdater$ outputs a belief $\belief_{\timeindex+1}$ given the previous belief $\belief_\timeindex$, an observation $\observation_\timeindex$, and additional conditioning variables $\control_\timeindex$, i.e., controls.
Common belief updaters include: gradient descent, where the belief is a point estimate obtained by minimizing prediction error (e.g., mean-squared error); Kalman filters, where the belief is a set of statistics defining a Gaussian distribution over parameters refined via Bayesian updates; linear regression, where the belief is a least-squares point estimate fit to observed transitions; and particle filters, where the belief is a set of weighted parameter samples reweighted and resampled based on observations.
Formally, a belief updater is defined as 
\begin{equation}
    \belief_{\timeindex+1} = \beliefupdater(\belief_{\timestart:\timeindex}, \observation_{\timestart+1:\timeindex+1}, \control_{\timestart:\timeindex}).
\end{equation}
In general, the belief at time $\timeindex+1$ depends on the preceding sequences of beliefs, observations, and controls. 
In practice, most belief updaters are Markovian, i.e., $\belief_{\timeindex+1} = \beliefupdater(\belief_{\timeindex}, \observation_{\timeindex+1}, \control_{\timeindex})$.
The full learning process is shown in \cref{alg:general_learning}.

\begin{algorithm}[ht!]
\SetAlgoLined
\DontPrintSemicolon
\caption{Learning Process $\learningfn$}
\label{alg:general_learning}
\KwIn{$ \belief_\timestart, \state_\timestart, \params_{\timestart}, \noise_{\timestart:\timestart+i}, \control_{\timestart:\timestart + \timeindex}$}
{
    \BlankLine
    \tcp{Propagate state, dynamics, and belief from $\timestart$ to $\timeindex + 1$}
    \For{$j = \timestart \dots \timeindex$}{
        $\params_{j+1} = \paramsdynamics(\params_j) + \paramsnoise_j$
        \tcp*{Nature propagates parameters}
        $\state_{j+1} = \dynamics(\state_j, \control_j, \params_{j+1}) + \statenoise_j$
        \tcp*{Nature propagates state}
        $\observation_{j+1} = \observationfn(\state_{j+1})$
        \tcp*{Generate observation}
        $\belief_{j+1} = \beliefupdater(\belief_{\timestart:j}, \observation_{\timestart+1:j+1}, \control_{\timestart:j})$
        \tcp*{Update belief}
    }
    \BlankLine
    \Return{$\belief_{\timeindex+1}$}
}
\end{algorithm}

\subsection{Planning}
At planning time $\timestart$, the agent computes a sequence of $\horizon$ controls such that it minimizes a cost $\cost$.
Generically, cost can be defined as a function of predicted observations $\observationpredicted_{\timestart+1:\timestart + \horizon}$, controls $\control = \control_{\timeindex:\timeindex+\horizon}$, and predicted beliefs $\beliefpredicted = \beliefpredicted_{\timestart+1:\timestart + \horizon}$, i.e., $\cost(\observationpredicted, \control, \beliefpredicted)$.

The predicted beliefs are given by a \textit{belief dynamics} function $\beliefdynamics$ that models how belief will evolve over time. 
Formally,
\begin{equation}
\label{eq:beliefdynamics}
    \beliefpredicted_{\timeindex+1} = \beliefdynamics(\beliefpredicted_\timeindex, \state_\timestart, \noisepredicted_{\timestart:\timestart+i}, \control_{\timestart:\timestart + \timeindex}),
\end{equation}
where $\noisepredicted_{\timestart:\timestart+i}$ is a sequence of predicted noise samples. 
\cref{fig:causal} sections ``present'' and ``future'' illustrate the causal relationships among the variables in \cref{eq:beliefdynamics}.
We remark that $\beliefdynamics$ may differ from the learning process $\learningfn$. 
This distinction allows us to separate the \textit{actual} belief evolution (observed during execution) from the \textit{predicted} belief evolution (used during planning). 

Finally, the agent's optimization problem is given by
\begin{subequations}
\label{eq:optimization_problem}
    \begin{align}
        \min_\control \, \cost(\observationpredicted, \control, \beliefpredicted)&\\
        \textrm{s.t.} \, \beliefpredicted_{\timeindex+1} &= \beliefdynamics(\beliefpredicted_\timeindex, \state_\timestart, \noisepredicted_{\timestart:\timestart+i}, \control_{\timestart:\timestart + \timeindex})\\
        \statepredicted_{\timeindex+1} &= \dynamics(\statepredicted_{\timeindex}, \control_\timeindex, \beliefpredicted_{\timeindex+1}) + \hat{\statenoise}_\timeindex\\
        \observationpredicted_{\timeindex+1} &= \observationfn(\statepredicted_{\timeindex+1}),
    \end{align}
\end{subequations}
where the initial state $\statepredicted_\timestart = \state_\timestart$, and belief $\beliefpredicted_\timestart = \belief_\timestart$ are assumed to be known.
We refer to the \textit{planner} as the algorithm used to solve problem \eqref{eq:optimization_problem}.

\contribution[A modular framework for sequential decision-making under dynamics uncertainty]{
We present a framework that cleanly exposes the causal dependencies between parameter beliefs, controls, and observations.
This framework unifies existing work on decision making under dynamics uncertainty and allows us to compose dynamics models (e.g., Gaussian processes, physics-based models, neural networks, etc.), belief updaters (e.g., Kalman filters, particle filters, online gradient descent), belief dynamics, observation models, and planners, while maintaining the same causal structure.
This invariance allows us to derive a general information-gathering cost that is agnostic to these choices (\cref{sec:contribution2}), unlike existing information-based costs which explicitly depend on the chosen dynamics model and update procedure \cite{sukhija2023optimistic, buisson2020actively, krause2007nonmyopic, krause2008near,zimmer2018safe,abraham2019active,vantilborgh2025dual,davydov2024first, lew2022safe, palafox2025info}.
Finally, the modularity of the framework also allows us to make an explicit connection to learning in (\cref{sec:contribution3}).}

\section{A General Information-Gathering Cost for Dynamics Uncertainty}
\label{sec:contribution2}

In this section, we derive a general information-gathering cost based on Massey's directed information \cite{massey1990causality}, which captures the causal structure of sequential decision-making exposed in the framework of \Cref{sec:contribution1}. 
This generality allows practitioners to
model information gain across different domains without re-deriving cost functions depending on the chosen dynamics, e.g., mutual information costs for Gaussian process dynamics \cite{sukhija2023optimistic, buisson2020actively, krause2007nonmyopic, krause2008near, zimmer2018safe}.
We conclude this section with \Cref{thm:mi_special_case}, in which we prove that under certain assumptions, this cost reduces to the popular mutual information cost used in the information-gathering literature. 

\subsubsection{Mutual Information}
Mutual information quantifies the information obtained about  random variable $X$ by observing random variable $Y$ \cite[Def. 8.9.1]{cover1999elements}.
Formally, it is defined as
\begin{subequations}
    \begin{align}
        \mutualinfo(X;Y) &\triangleq \entropy(X) - \entropy(X\mid Y)\\
        &= \mathbb{E}_{X\sim p(x)}\left[-\log p(X)\right] - \mathbb{E}_{(X,Y)\sim p(x,y)}\left[-\log p(X|Y)\right] \\
        &= -\int p(x)\log p(x)\,dx + \int p(x,y)\log p(x|y)\,dx\,dy
        \label{eq:mutual_information}
    \end{align}
\end{subequations}
where $\entropy$ is the differential entropy of a random variable \cite[Def. 8.1]{cover1999elements}.

One could encode information-gathering behavior by maximizing mutual information between the sequences $\beliefpredicted$ and $\observationpredicted$. 
Unfortunately, this would fail to account for their causal dependencies, i.e., in sequential decision making, future beliefs do not influence past observations (see \cref{fig:causal}). 
Therefore, in the next section we derive an information-gathering cost based on directed information \cite{massey1990causality}, a metric that accounts for the causal relationship between two random sequences.

\subsubsection{Directed Information}

Following prior work on directed information~\cite{kramer1998directed,massey1990causality},
we begin by defining the \textit{causally} conditioned probability of a random variable.
Given two sequences of random variables $Y = Y_{\timestart+1:\timestart+\horizon}$ and $X = X_{\timestart:\timestart+\horizon-1}$, 
the causally-conditional probability of $Y$ given $X$ is the product of the probability of each element $Y_i$ conditioned only on the history available at each time step:
\begin{equation}
    p(Y \parallel X) \triangleq \prod_{\timeindex=\timestart}^{\timestart+\horizon-1} p(Y_{\timeindex+1} \mid Y_{\timestart+1:\timeindex}, X_{\timestart:\timeindex}),
\label{eq:causal_cond_prob}
\end{equation}
where we have slightly abused notation such that $Y_{\timestart+1:\timestart} = Y_{\timestart+1}$. 
The \textit{causal} entropy \cite[Def. 3.1]{kramer1998directed} is given by applying the definition of entropy to the distribution in \eqref{eq:causal_cond_prob} as follows:
\begin{subequations}
\label{eq:causal_entropy}
    \begin{align}
    \entropy(Y \parallel X) &\triangleq -\mathbb{E}_{Y \sim P(Y\parallel X)}\left[ \log p(Y \parallel X) \right] \\
    &= -\mathbb{E}_{Y \sim P(Y\parallel X)}\left[ \log \left( \prod_{\timeindex=\timestart}^{\timestart+\horizon-1} p(Y_{\timeindex+1} \mid Y_{\timestart+1:\timeindex}, X_{\timestart:\timeindex}) \right) \right] \\
    &= \sum_{\timeindex=\timestart}^{\timestart+\horizon-1} \entropy(Y_{\timeindex+1} \mid Y_{\timestart+1:\timeindex}, X_{\timestart:\timeindex}).
    \end{align}
\end{subequations}
Intuitively, causal entropy measures the accumulated uncertainty as the random variables $X, Y$ are observed sequentially.

The directed information  between sequences $X$ and $Y$ is almost identical to the definition of mutual information in \eqref{eq:mutual_information} but uses the causally-conditional probability of the sequences \cite[Def. 3.5]{kramer1998directed}, i.e.,
\begin{equation}
\label{eq:directed_information}
    \directedinfo(X \rightarrow Y) \triangleq \entropy(Y) - \entropy(Y \parallel X)
\end{equation}
Intuitively, directed information is the total information  about $Y$ when the agent updates its uncertainty over $Y_\timeindex$ using only the histories $Y_{\timestart:\timeindex-1}$ and $X_{\timestart:\timeindex-1}$. 

We often need to quantify information flow conditioned on a third sequence $Z$. 
For example, in our setting $Z$ will be the control sequence which causally influences both beliefs and observations.
In this case, it is useful to use causally \textit{conditional} directed information, which encodes the reduction in the causal entropy of $Y$ given $Z$, when $X$ is also causally observed \cite[Def.~3.7]{kramer1998directed}.
Formally, it is given by:
\begin{equation}
    \mutualinfo(X \rightarrow Y \parallel Z) \triangleq \entropy(Y \parallel Z) - \entropy(Y \parallel X, Z).
    \label{eq:causal_cond_directed_info}
\end{equation}
This metric quantifies the information flow from $X$ to $Y$ that is not explained by the causal history of $Z$.

\subsubsection{The Directed Information Cost}

In our setting, an information-gathering agent maximizes directed information between predicted beliefs $\beliefpredicted$ and predicted observations $\observationpredicted$, causally conditioned on $\control$.
Formally, the cost is
\begin{equation}
    \cost_{\text{info}}(\observationpredicted, \control, \beliefpredicted) = -\directedinfo(\beliefpredicted \rightarrow \observationpredicted \parallel \control) = -\left(\entropy(\observationpredicted \parallel \control)-\entropy(\observationpredicted \parallel \control,\beliefpredicted)\right)
    \label{eq:objective_directed_info}
\end{equation}
Computing the term $\entropy(\observationpredicted \parallel \control)$ requires carefully following the causal dependencies:
the predicted observation $\observationpredicted_{\timeindex+1}$ depends on the predicted state $\statepredicted_{\timeindex+1}$ which depends on the belief $\beliefpredicted_\timeindex$\footnote{The dependence between $\statepredicted_{\timeindex+1}$ and $\beliefpredicted_\timeindex$ is shown by the red lines in \cref{fig:causal}}. 
Finally, the belief $\beliefpredicted_\timeindex$ depends on $\observationpredicted_{\timestart:\timeindex}$ and $\control_{\timestart:\timeindex}$.
Therefore, $\observationpredicted_{\timeindex+1}$ is conditioned on the history $\observationpredicted_{\timestart:\timeindex}$ and $\control_{\timestart:\timeindex}$:
\begin{equation}
    \entropy(\observationpredicted \parallel \control) = \sum_{\timeindex=\timestart}^{\timestart+\horizon-1} \entropy(\observationpredicted_{\timeindex+1} \mid \observationpredicted_{\timestart:\timeindex}, \control_{\timestart:\timeindex}).
    \label{eq:obs_causal_entropy_marg}
\end{equation}
On the other hand, computing $\entropy(\observationpredicted \parallel \control, \beliefpredicted)$ is easier since it can be expressed as a sum of entropies by applying \eqref{eq:causal_entropy}, and then simplified with the Markov property of the system \eqref{eq:dynamical_system}:
\begin{equation}
        \entropy(\observationpredicted \parallel \control, \beliefpredicted) = \sum_{\timeindex=\timestart}^{\timestart+\horizon-1} \entropy(\observationpredicted_{\timeindex+1} \mid \observationpredicted_{
        \timestart: \timeindex}, \control_{
        \timestart: \timeindex}, \beliefpredicted_{
        \timestart: \timeindex})
        = \sum_{\timeindex=\timestart}^{\timestart+\horizon-1} \entropy(\observationpredicted_{\timeindex+1} \mid \observationpredicted_\timeindex, \control_\timeindex, \beliefpredicted_\timeindex).
    \label{eq:obs_causal_entropy}
\end{equation}
Then, substituting into \eqref{eq:objective_directed_info} yields the \textbf{directed information cost}:
\begin{equation}
    \boxed{\cost_{\text{info}}(\observationpredicted, \control, \beliefpredicted) = -\sum_{\timeindex=\timestart}^{\timestart+\horizon-1} \left[ \entropy(\observationpredicted_{\timeindex+1} \mid \observationpredicted_{\timestart:\timeindex},\control_{\timestart:\timeindex}) - \entropy(\observationpredicted_{\timeindex+1} \mid \observationpredicted_\timeindex, \control_\timeindex, \beliefpredicted_\timeindex)\right],}
    \label{eq:final_directed_cost}
\end{equation}
 where 
\begin{equation}
\entropy(\observationpredicted_{\timeindex+1}\mid \observationpredicted_{\timestart:\timeindex}, \control_{\timestart:\timeindex}) = -\int p(\observationpredicted_{\timeindex+1} \mid \observationpredicted_{\timestart:\timeindex}, \control_{\timestart:\timeindex})\log p(\observationpredicted_{\timeindex+1} \mid \observationpredicted_{\timestart:\timeindex}, \control_{\timestart:\timeindex}) d \observationpredicted_{\timeindex + 1},
\end{equation}
\begin{equation}
\label{eq:obs_marginal}
    p(\observationpredicted_{\timeindex+1} | \observationpredicted_{\timestart:\timeindex}, \control_{\timestart:\timeindex}) = \int p(\observationpredicted_{\timeindex+1} | \observationpredicted_\timeindex, \control_\timeindex, \beliefpredicted_{\timeindex}) \underbrace{p(\beliefpredicted_{\timeindex}|\observationpredicted_{\timestart:\timeindex}, \control_{\timestart:\timeindex})}_{\textrm{belief dynamics }}d\beliefpredicted_{\timeindex}.
\end{equation}
Note how \eqref{eq:final_directed_cost} applies to arbitrary dynamics models, belief update procedures, and observation models.

\subsubsection{Connection to Mutual Information Cost}

We now prove that under certain assumptions, our directed information cost reduces to the mutual information cost in the information gathering literature \cite{sukhija2023optimistic, buisson2020actively,lew2022safe,krause2007nonmyopic,zimmer2018safe,abraham2019active,vantilborgh2025dual,davydov2024first, palafox2025info}.

\begin{theorem}[Mutual Information Cost as a Special Case of the Directed Information Cost]
\label{thm:mi_special_case}
Under the following assumptions:
\begin{enumerate}
    \item \textbf{Full observability:} $\observation_{\timeindex} = \state_{\timeindex}$
    \item \textbf{Gaussian belief:} $\beliefpredicted_{\timeindex}  = \{\beliefmean_\timeindex, \beliefcov_\timeindex\} \implies p(\beliefpredicted_{j} \mid \statepredicted_{\timestart:\timeindex}, \control_{\timestart:\timeindex}) = \mathcal{N}(\beliefpredicted_{j}; \beliefmean_\timeindex, \beliefcov_\timeindex )$
    \item \textbf{Static belief dynamics:} $\beliefpredicted_{\timeindex} = \belief_\timestart, \forall \timeindex \ge \timestart$
    \item \textbf{Linearized state dynamics:} $\dynamics(\statepredicted_\timeindex, \control_\timeindex, \beliefpredicted_{\timeindex}) \approx \dynamics(\statepredicted_\timeindex, \control_\timeindex, \beliefmean_\timeindex) + \dynjac_\timeindex (\beliefpredicted_{\timeindex} - \beliefmean_\timeindex)$ where $\dynjac_\timeindex = \frac{\partial \dynamics}{\partial \params}(\statepredicted_\timeindex, \control_\timeindex, \beliefmean_\timeindex)$.
\end{enumerate}
the directed information cost \eqref{eq:final_directed_cost} reduces to \cref{eq:mutual_info_cost}, which is the sum of conditional mutual informations between the starting parameter belief $\belief_\timestart$ and the states $\statepredicted_{\timeindex}, \forall \timestart \leq \timeindex \leq \timestart + \horizon$, i.e., $\sum_{\timeindex = \timestart}^{\timestart + \horizon - 1}\mutualinfo(\belief_\timestart; \statepredicted_{\timeindex+1} \mid \statepredicted_{\timeindex}, \control_{\timeindex})$. 
\begin{equation}
    \cost_{\text{info}}(\statepredicted, \control, \beliefpredicted) = - \frac{1}{2} \sum_{\timeindex=\timestart}^{\timestart+\horizon-1} \log \left(\frac{\det (\innovationcovpredicted_\timeindex)}{\det (\statenoisecov)}\right),
    \label{eq:mutual_info_cost}
\end{equation}
where $\innovationcovpredicted_\timeindex = \dynjac_\timeindex \beliefcov_\timestart \dynjac_\timeindex^\top + \statenoisecov$.
\end{theorem}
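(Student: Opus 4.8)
The plan is to evaluate the two causal-entropy terms inside the summand of the directed information cost \eqref{eq:final_directed_cost} under the four assumptions and show their difference collapses to the log-determinant ratio in \eqref{eq:mutual_info_cost}, proceeding term-by-term. First I would apply full observability (Assumption 1) to replace every predicted observation $\observationpredicted_\timeindex$ with the predicted state $\statepredicted_\timeindex$, so each summand becomes $\entropy(\statepredicted_{\timeindex+1} \mid \statepredicted_{\timestart:\timeindex}, \control_{\timestart:\timeindex}) - \entropy(\statepredicted_{\timeindex+1} \mid \statepredicted_\timeindex, \control_\timeindex, \beliefpredicted_\timeindex)$. Next I would combine the static and Gaussian belief assumptions (Assumptions 2 and 3): since $\beliefpredicted_\timeindex = \belief_\timestart$ for all $\timeindex$, the belief distribution $p(\beliefpredicted_\timeindex \mid \statepredicted_{\timestart:\timeindex}, \control_{\timestart:\timeindex}) = \mathcal{N}(\beliefmean_\timestart, \beliefcov_\timestart)$ is independent of the history. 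Substituting this into the marginalization integral \eqref{eq:obs_marginal} and invoking the Markov property of \eqref{eq:dynamical_system} collapses the conditioning of the first entropy from the full history down to $(\statepredicted_\timeindex, \control_\timeindex)$, so that the summand becomes exactly the conditional mutual information $\mutualinfo(\belief_\timestart; \statepredicted_{\timeindex+1} \mid \statepredicted_\timeindex, \control_\timeindex)$. I expect this collapse to be the main obstacle: one must argue carefully that fixing the belief renders the past predicted states $\statepredicted_{\timestart:\timeindex-1}$ uninformative about $\statepredicted_{\timeindex+1}$ beyond what $\statepredicted_\timeindex$ already carries, which is precisely what the static-belief assumption buys us.

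With the conditioning structure in hand, I would then use the linearization (Assumption 4) to compute both entropies as Gaussian entropies. Conditioned on $\statepredicted_\timeindex$, $\control_\timeindex$, and the belief-distributed parameter, the linearized dynamics read $\statepredicted_{\timeindex+1} = \dynamics(\statepredicted_\timeindex, \control_\timeindex, \beliefmean_\timestart) + \dynjac_\timeindex(\belief_\timestart - \beliefmean_\timestart) + \statenoise_\timeindex$, where $\belief_\timestart - \beliefmean_\timestart \sim \mathcal{N}(\mathbf{0}, \beliefcov_\timestart)$ and $\statenoise_\timeindex \sim \mathcal{N}(\mathbf{0}, \statenoisecov)$ are independent. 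Conditioning additionally on $\belief_\timestart$ removes the parameter term and leaves only state noise, giving $p(\statepredicted_{\timeindex+1} \mid \statepredicted_\timeindex, \control_\timeindex, \belief_\timestart) = \mathcal{N}(\,\cdot\,, \statenoisecov)$; marginalizing over the Gaussian belief instead propagates $\beliefcov_\timestart$ through $\dynjac_\timeindex$ to yield $p(\statepredicted_{\timeindex+1} \mid \statepredicted_\timeindex, \control_\timeindex) = \mathcal{N}(\,\cdot\,, \dynjac_\timeindex \beliefcov_\timestart \dynjac_\timeindex^\top + \statenoisecov) = \mathcal{N}(\,\cdot\,, \innovationcovpredicted_\timeindex)$.

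Finally, I would apply the closed form $\entropy(\mathcal{N}(\,\cdot\,,\Sigma)) = \tfrac{1}{2}\log\!\big((2\pi e)^{\dimstate}\det\Sigma\big)$ to both conditionals. Because both distributions are over $\statepredicted_{\timeindex+1} \in \mathbb{R}^{\dimstate}$, the additive constant $\tfrac{\dimstate}{2}\log(2\pi e)$ cancels in the difference, leaving $\tfrac{1}{2}\log\!\big(\det\innovationcovpredicted_\timeindex / \det\statenoisecov\big)$ for each $\timeindex$. Negating and summing over $\timeindex$ from $\timestart$ to $\timestart+\horizon-1$ then recovers \eqref{eq:mutual_info_cost}, and identifying each summand with $\mutualinfo(\belief_\timestart; \statepredicted_{\timeindex+1} \mid \statepredicted_\timeindex, \control_\timeindex)$ establishes the stated mutual-information interpretation. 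The steps after the conditioning collapse are routine Gaussian-entropy bookkeeping and I would not expect them to pose difficulty.
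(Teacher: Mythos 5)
Your proposal is correct and follows essentially the same route as the paper's proof: Assumption 1 to replace observations with states, Assumptions 2--3 to collapse the history conditioning and identify each summand as the conditional mutual information $\mutualinfo(\belief_\timestart; \statepredicted_{\timeindex+1} \mid \statepredicted_\timeindex, \control_\timeindex)$, and Assumption 4 to evaluate both terms as Gaussian entropies whose $(2\pi e)$ constants cancel. The independence argument you flag as the main obstacle is exactly the step the paper uses to justify its Markov-collapse equation, so no gap remains.
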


\begin{proof}
Under Assumption 1,  the state is directly observed, so $\observationpredicted_{\timeindex+1} = \statepredicted_{\timeindex+1}$ and the directed information cost \eqref{eq:final_directed_cost} becomes
\begin{equation}
    \cost_{\text{info}}(\statepredicted, \control, \beliefpredicted) = -\sum_{\timeindex=\timestart}^{\timestart+\horizon-1} \left[ \entropy(\statepredicted_{\timeindex+1} \mid \statepredicted_{\timestart:\timeindex},\control_{\timestart:\timeindex}) - \entropy(\statepredicted_{\timeindex+1} \mid \statepredicted_\timeindex, \control_\timeindex, \beliefpredicted_\timeindex)\right].
    \label{eq:cost_under_a1}
\end{equation}
Under Assumption 3, the belief is static and at each timestep $\timeindex > \timestart$ is therefore independent of history $(\statepredicted_{\timestart:\timeindex-1}, \control_{\timestart:\timeindex-1})$. 
This implies that given $(\statepredicted_\timeindex, \control_\timeindex, \beliefpredicted_\timeindex)$, the next state $\statepredicted_{\timeindex+1} = \dynamics(\statepredicted_\timeindex, \control_\timeindex, \beliefpredicted_\timeindex) + \statenoise_\timeindex$ is also independent of the history.
Therefore, the first term in \cref{eq:cost_under_a1} simplifies as 
\begin{equation}
    \entropy(\statepredicted_{\timeindex+1} \mid \statepredicted_{\timestart:\timeindex}, \control_{\timestart:\timeindex}) = \entropy(\statepredicted_{\timeindex+1} \mid \statepredicted_\timeindex, \control_\timeindex, \belief_\timestart).
    \label{eq:markov_requirement}
\end{equation}
Substituting into \cref{eq:cost_under_a1} results in 
\begin{subequations}
    \begin{align}
        \cost_{\text{info}}(\statepredicted, \control, \beliefpredicted) &= -\sum_{\timeindex=\timestart}^{\timestart+\horizon-1} \left[ \entropy(\statepredicted_{\timeindex+1} \mid \statepredicted_{\timestart},\control_{\timestart}) - \entropy(\statepredicted_{\timeindex+1} \mid \statepredicted_\timeindex, \control_\timeindex, \belief_\timestart)\right] \label{eq:cost_as_mi}\\ &= -\sum_{\timeindex=\timestart}^{\timestart+\horizon-1} \mutualinfo(\belief_\timestart; \statepredicted_{\timeindex+1} \mid \statepredicted_{\timeindex}, \control_{\timeindex}).
    \end{align}
\end{subequations}
where $\mutualinfo(\belief_\timestart; \statepredicted_{\timeindex+1} \mid \statepredicted_{\timeindex}, \control_{\timeindex})$ is the conditional mutual information between the initial belief $\belief_\timestart$ and predicted state $\statepredicted_{\timeindex+1}$, conditioned on the previous predicted state and control, $\statepredicted_{\timeindex} \control_{\timeindex}$, respectively.
This shows that, under Assumptions 1 and 3, our directed information cost \cref{eq:final_directed_cost} reduces to a mutual information cost.
We now evaluate each entropy term and simplify further. 

For the second term in \eqref{eq:cost_as_mi}, given $\statepredicted_\timeindex$, $\control_\timeindex$, and $\belief_\timestart$, the only randomness in $\statepredicted_{\timeindex+1}$ is the Gaussian state noise $\statenoise_\timeindex \sim \mathcal{N}(\mathbf{0}, \statenoisecov)$.
Thus, $\entropy(\statepredicted_{\timeindex+1} \mid \statepredicted_\timeindex, \control_\timeindex, \belief_\timestart) = \frac{1}{2}\log\det(2\pi e \, \statenoisecov)$, cf. \cite[Theorem~8.4.1]{cover1999elements}.

To evaluate the first term in \eqref{eq:cost_as_mi}, we must marginalize over the belief.
Under Assumption 2, the belief is Gaussian: $p(\beliefpredicted_\timeindex) = \mathcal{N}(\beliefpredicted_\timeindex;\beliefmean_\timestart, \beliefcov_\timestart)$.
Under Assumption 4, $\statepredicted_{\timeindex+1}$ is an affine function of $\beliefpredicted_\timeindex$ plus Gaussian noise:
\begin{equation}
    \statepredicted_{\timeindex+1}(\beliefpredicted_{\timeindex}) = \dynamics(\statepredicted_\timeindex, \control_\timeindex, \beliefmean_\timestart) + \dynjac_\timeindex (\beliefpredicted_\timeindex - \beliefmean_\timestart) + \statenoise_\timeindex.
\end{equation}
Since affine transformations of Gaussian random variables remain Gaussian, marginalizing over $\beliefpredicted_\timeindex$ yields:
\begin{equation}
    p(\statepredicted_{\timeindex+1} \mid \statepredicted_\timeindex, \control_\timeindex) = \int p(\statepredicted_{\timeindex+1} \mid \statepredicted_\timeindex, \control_\timeindex, \beliefpredicted_\timeindex) p(\beliefpredicted_\timeindex) d\beliefpredicted_\timeindex = \mathcal{N}(\dynamics(\statepredicted_\timeindex, \control_\timeindex, \beliefmean_\timestart), \innovationcovpredicted_\timeindex),
\end{equation}
where $\innovationcovpredicted_\timeindex = \dynjac_\timeindex \beliefcov_\timestart \dynjac_\timeindex^\top + \statenoisecov$.
Thus, $\entropy(\statepredicted_{\timeindex+1} \mid \statepredicted_\timeindex, \control_\timeindex) = \frac{1}{2}\log\det(2\pi e \, \innovationcovpredicted_\timeindex)$.
Substituting both entropy terms yields the desired expression, as follows:
\begin{subequations}
    \begin{align}
        \cost_{\text{info}}(\statepredicted, \control, \beliefpredicted) &= -\sum_{\timeindex=\timestart}^{\timestart+\horizon-1} \left[ \frac{1}{2}\log\det(2\pi e \, \innovationcovpredicted_\timeindex) - \frac{1}{2}\log\det(2\pi e \, \statenoisecov) \right]\\
        &= -\frac{1}{2}\sum_{\timeindex=\timestart}^{\timestart+\horizon-1} \log\left( \frac{\det(\innovationcovpredicted_\timeindex)}{\det(\statenoisecov)} \right).
    \end{align}
\end{subequations}
\qed
\end{proof}

\contribution[A general information-gathering cost based on directed information.]{
Using our framework, we derive a novel information-gathering cost that captures the causal structure of sequential decision-making under dynamics uncertainty.
Unlike existing approaches, which derive bespoke costs tailored to specific dynamics models or belief update procedures, our cost requires only minimal structural assumptions (Markov dynamics with additive noise) and makes no assumptions about the dynamics model, belief updater, belief dynamics, observations model, or planner.
In \Cref{thm:mi_special_case}, we prove that the popular mutual information cost is a special case (cf. \cite{sukhija2023optimistic, buisson2020actively,lew2022safe,krause2007nonmyopic,zimmer2018safe,abraham2019active,vantilborgh2025dual,davydov2024first, palafox2025info} for examples).}

\section{An Explicit Connection Between Information Gathering and Dynamics Parameter Learning}
\label{sec:contribution3}

\begin{wrapfigure}{r}{0.55\textwidth}
\vspace{-20pt}
\begin{algorithm}[H]
{\small
\SetAlgoLined
\DontPrintSemicolon
\caption{Extended Kalman Filter Belief Updater $\beliefupdater$}
\label{alg:ekf}
\KwIn{$\belief_\timeindex = \{\beliefmean_\timeindex, \beliefcov_\timeindex \}$, $\observation_{\timeindex+1}$, and $\control_\timeindex$.}
{
    \BlankLine
    \tcp{1. Prediction Step}
    $\beliefmean_{\timeindex+1|\timeindex} = \paramsdynamics(\beliefmean_{\timeindex})$\\
    $\paramsprocjac_\timeindex = \frac{\partial \paramsdynamics}{\partial \params}(\beliefmean_\timeindex)$\\
    $\beliefcov_{\timeindex+1|\timeindex} = \paramsprocjac_\timeindex \beliefcov_\timeindex \paramsprocjac_\timeindex^\top + \paramsnoisecov$\\
    $\statepredicted_{\timeindex+1|\timeindex} = \dynamics(\state_{\timeindex}, \control_\timeindex, \beliefmean_{\timeindex+1|\timeindex})$\\
    $\observationpredicted_{\timeindex+1|\timeindex} = \observationfn(\statepredicted_{\timeindex+1|\timeindex})$
    \BlankLine
    \tcp{2. Correction Step}
    $\dynjac_{\timeindex} = \frac{\partial \dynamics}{\partial \params}(\state_{\timeindex}, \control_{\timeindex}, \beliefmean_{\timeindex+1|\timeindex})$\\
    $\obsjac_{\timeindex} = \frac{\partial \observationfn}{\partial \state}(\statepredicted_{\timeindex+1|\timeindex})$\\
    $\innovationcov_\timeindex = \obsjac_{\timeindex} \dynjac_\timeindex \beliefcov_{\timeindex+1|\timeindex} \dynjac_\timeindex^\top \obsjac_{\timeindex}^\top + \obsjac_{\timeindex} \statenoisecov \obsjac_{\timeindex}^\top$\\
    $\kalmangain_\timeindex = \beliefcov_{\timeindex+1|\timeindex}\dynjac_\timeindex^\top \obsjac_{\timeindex}^\top \innovationcov_\timeindex^{-1}$
    \\$\innovation_{\timeindex} = \observation_{\timeindex+1} - \observationpredicted_{\timeindex+1|\timeindex}$\\
    $\beliefmean_{\timeindex+1} = \beliefmean_{\timeindex+1|\timeindex} + \kalmangain_\timeindex\innovation_{\timeindex}$\\
    $\beliefcov_{\timeindex+1} = (\mathbf{I} - \kalmangain_\timeindex \obsjac_{\timeindex} \dynjac_\timeindex)\beliefcov_{\timeindex+1|\timeindex}$
    \BlankLine
    \Return{$\belief_{\timeindex+1} = \{\beliefmean_{\timeindex+1}, \beliefcov_{\timeindex+1} \}$}
}
}
\end{algorithm}
\vspace{-20pt}
\end{wrapfigure}

In this section, we show that minimizing the mutual information cost of \Cref{thm:mi_special_case} directly maximizes the information gain in the dynamics parameter learning process when using a linearized Bayesian belief updater.
This analysis is enabled by the fact that our framework cleanly separates the learning process from the belief dynamics used during planning, and explicitly exposes causal dependencies between predicted beliefs, observations, and controls. 
This modularity comes at \emph{no performance cost}; rather, it  makes explicit the design choices (belief updater and dynamics) that are generally left implicit in the literature.

A common choice for the belief updater is the Extended Kalman Filter (EKF) shown in \cref{alg:ekf}, cf. \cite{thrun2002probabilistic,ljung1979asymptotic, goodwin2014adaptive}. 
The EKF defines the parameter belief as the mean $\beliefmean_\timeindex$ and covariance $\beliefcov_\timeindex$ of a Gaussian distribution, i.e., $\belief_\timeindex = \{\beliefmean_\timeindex, \beliefcov_\timeindex\}$ (as in Assumption 2 in \Cref{thm:mi_special_case}).
The EKF generalizes \cite{howell2025optimization,ollivier2018online} the bespoke Newton-like updates used in information-gathering literature, cf. \cite{davydov2024first,lew2022safe,harrison2024variational,harrison2020meta,hu2024active}. 


To see the explicit connection between the EKF and the mutual information cost, we rewrite the EKF's covariance update in (line 12, \cref{alg:ekf}) in information form:
\begin{equation}
    \beliefcov_{\timeindex+1}^{-1} = \beliefcov_{\timeindex+1|\timeindex}^{-1} + \dynjac_\timeindex^\top (\statenoisecov)^{-1} \dynjac_\timeindex. 
    \label{eq:}
\end{equation}
To compare with the mutual information cost, we express the EKF's information in the same units, i.e., $\log \det(\beliefcov_{\timeindex+1}^{-1})$.
Under Assumption 3 in \Cref{thm:mi_special_case}, this simplifies to $\log \det(\beliefcov_\timestart^{-1} + \dynjac_\timeindex^\top (\statenoisecov)^{-1} \dynjac_\timeindex)$. 
Both $\log \det(\beliefcov_{\timeindex+1}^{-1})$ and $\log \det(\innovationcovpredicted_\timeindex)$ from the mutual information cost \eqref{eq:mutual_info_cost} increase with the singular values of $\dynjac_\timeindex$.
Thus, minimizing the mutual information cost simultaneously maximizes the EKF information gain, providing a direct link between the planning cost and learning performance.

\contribution[An explicit connection between information gathering and learning.]{
We show that minimizing a mutual information-based cost is equivalent to maximizing information gain in the belief update for the case of linearized Bayesian estimation. 
To our knowledge, this is the first time this connection has been explicitly made, and the framework in \Cref{sec:contribution1,sec:contribution2} makes the analysis very straightforward.}

\section{Experiments}
\label{sec:contribution4}

We showcase the practical utility of our framework through several experiments in both single-robot and multi-robot systems. 
These experiments test two hypotheses about the benefits of active information gathering.
These hypotheses are not controversial in the literature, and are widely supported by existing work in information-aware planning, cf. \cite{davydov2024first,lew2022safe,harrison2024variational,harrison2020meta,hu2024active}.
Our objective here is to illustrate that an implementation of our proposed framework can replicate these patterns, and thus that it holds practical as well as theoretical value.

\paragraph{Hypothesis 1: Information gathering accelerates parameter learning and reduces uncertainty compared with random actions or passive learning, where the agent does not actively seek information-rich observations.}
We measure parameter estimation error $\|\bar{\vartheta}_i - \theta_i\|$ and covariance trace $\text{tr}(\Sigma_i)$ over time.

\paragraph{Hypothesis 2: Information gathering produces learned models that generalize better to unseen data.}
We compare model predictions against held-out state transitions the agent has never observed, reporting both single-step prediction error $\|x_{i+1} - f(x_i, u_i, \bar{\vartheta}_{i+1})\|$ and total prediction error computed by autoregressively predicting entire trajectories.

\subsubsection{Implementation Details} 

We use the EKF from \cref{alg:ekf} as our belief updater, the mutual information cost from \Cref{thm:mi_special_case} (which assumes full observability), and a sampling-based cross-entropy method as our planner \cite{pinneri2021sample}.
In all experiments, we perform online learning by running the EKF after every observation, following \cref{alg:general_learning}.
The complete planning cost combines the information-gathering cost with a task-specific cost.
Formally,
\begin{equation}
    \cost(\statepredicted, \control, \beliefpredicted) = \cost_{\text{task}}(\statepredicted, \control) + \lambda \cost_{\text{info}}(\statepredicted, \control, \beliefpredicted)
\end{equation}
where $\cost_{\text{info}}$ is the mutual information cost from \Cref{thm:mi_special_case}, and $\lambda \in \mathbb{R}$ is the independent variable varied to test the effects of information gathering. 
$\lambda = 0$ corresponds to the case of passive learning.
Full implementation details and hyperparameters are publicly available in the code repository.\footnote{\url{https://github.com/fernandopalafox/max}}

In all cases, the agent operates in a dynamical system that matches its model structure, and mismatches between observations and predictions arise solely from the difference between the ground-truth parameters and the parameter belief.
For multi-agent experiments, we partition the state into components corresponding to the ego and any opponents, planning from the perspective of agent 1 (ego) refining its model of other agents. 
For example, in a two-agent system the state is $\state_\timeindex^\top = \begin{bmatrix}
        \state_\timeindex^{1\top} & 
        \state_\timeindex^{2^\top}
    \end{bmatrix}$
where superscripts index by agent.
Next we describe the experiments

\begin{figure}[!ht]
\centering
\begin{minipage}{\linewidth}
\begin{minipage}{0.48\linewidth}
\centering
\includegraphics[width=\linewidth]{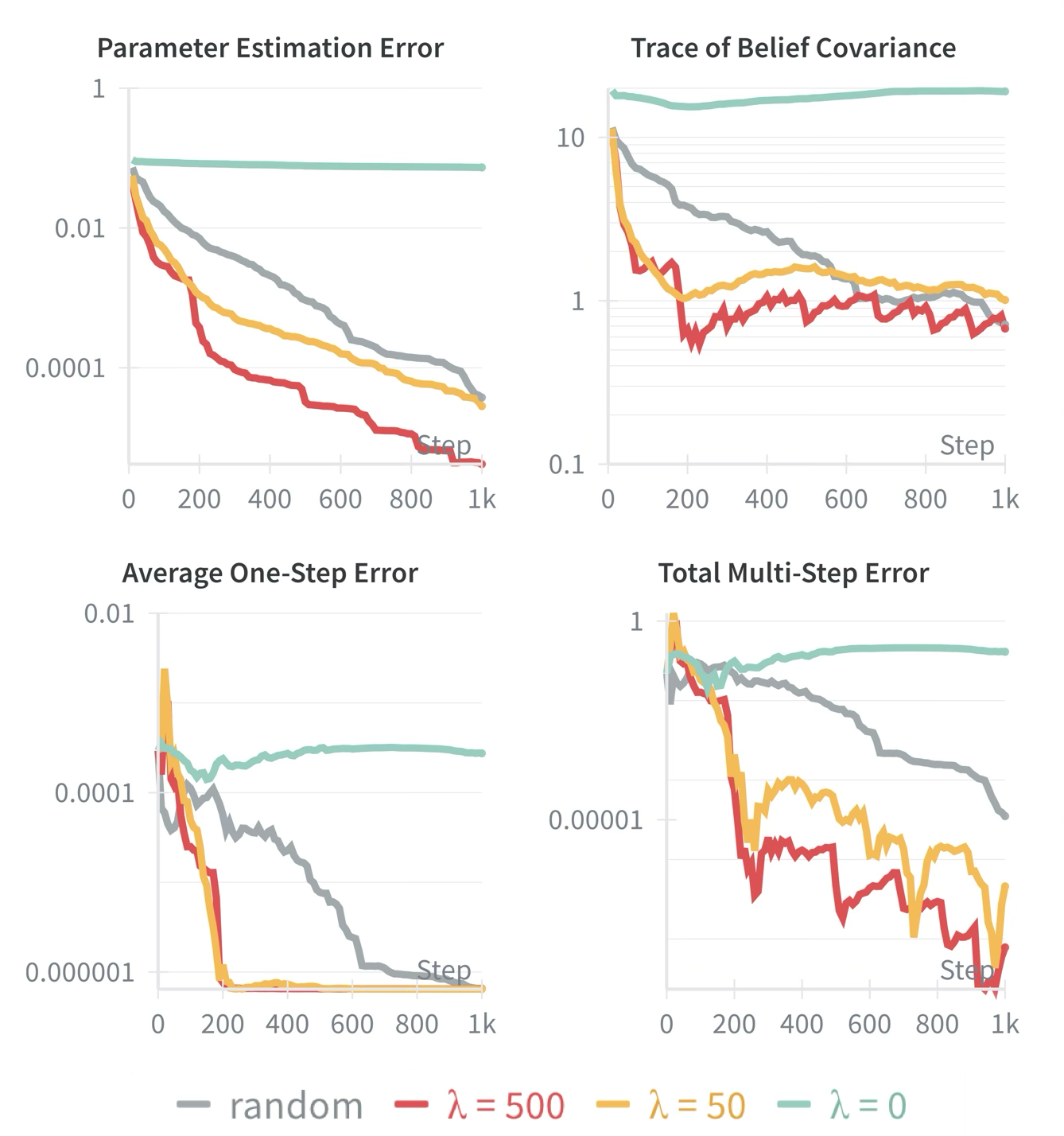}
\caption{Double integrator}
\label{fig:results_linear}
\end{minipage}
\hfill
\begin{minipage}{0.48\linewidth}
\centering
\includegraphics[width=\linewidth]{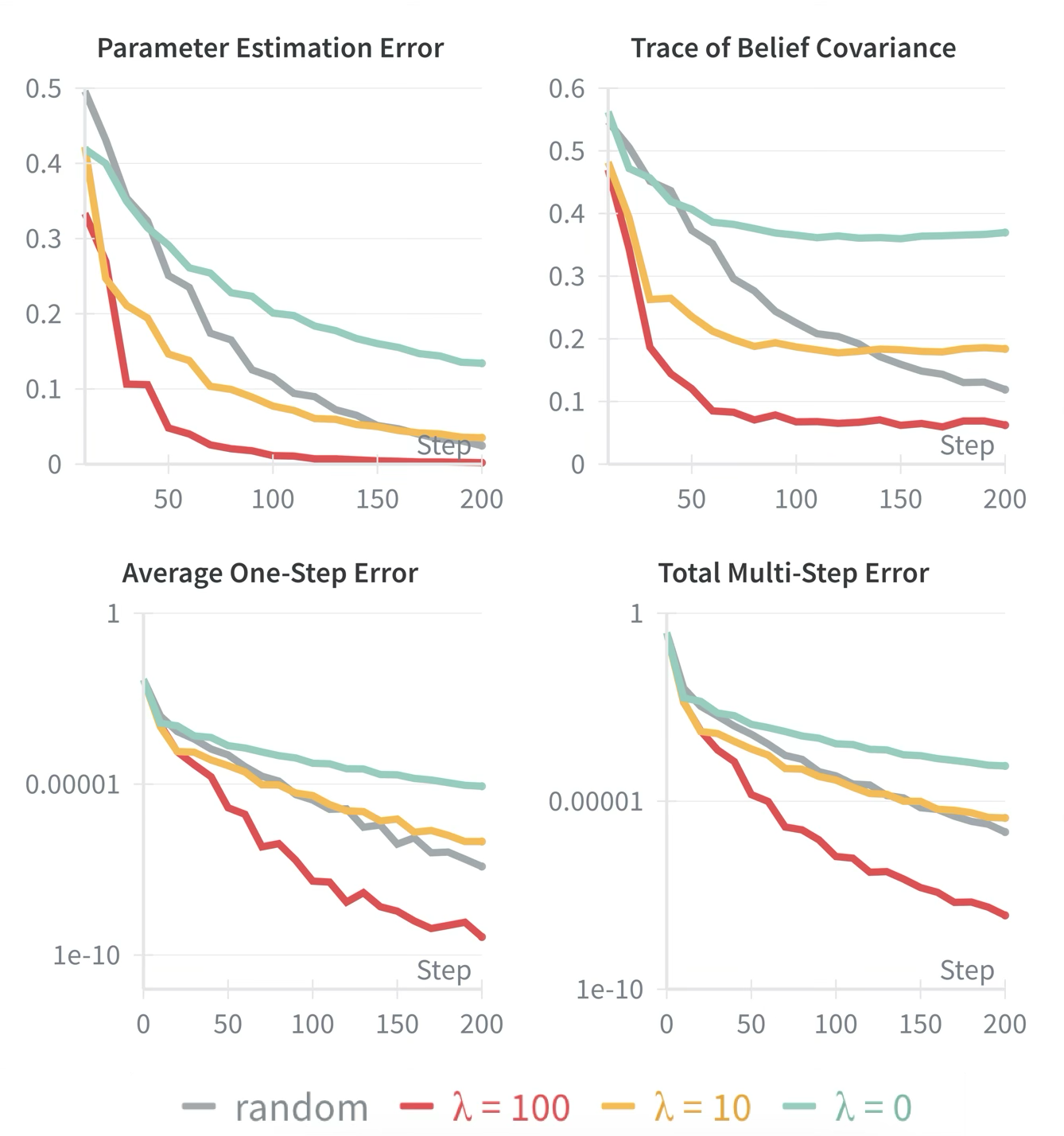}
\caption{Damped pendulum}
\label{fig:results_pendulum}
\end{minipage}
\end{minipage}

\vspace{1pt} 

\begin{minipage}{\linewidth}
\begin{minipage}{0.48\linewidth}
\centering
\includegraphics[width=\linewidth]{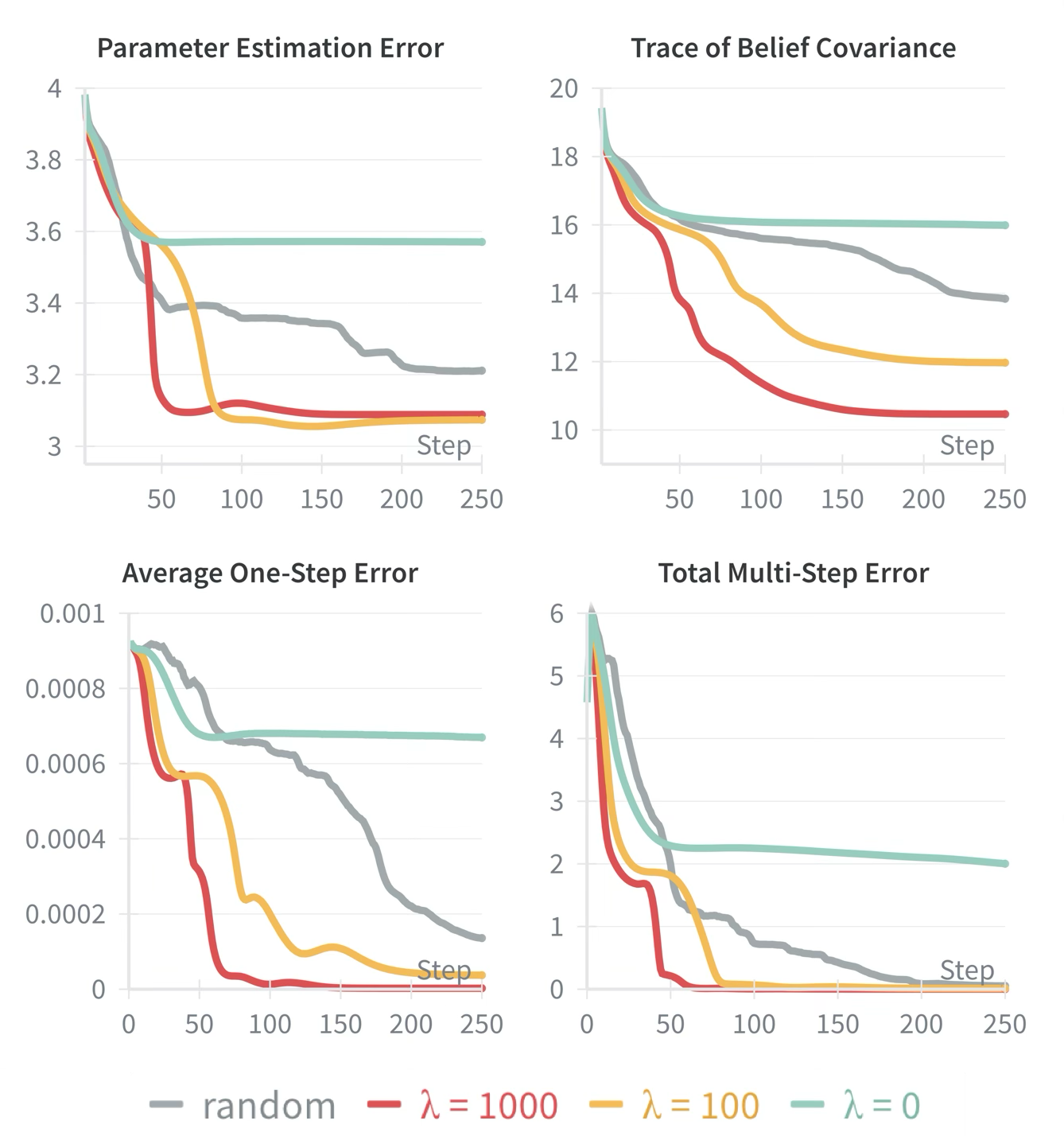}
\caption{PE with LQR policy}
\label{fig:results_lqr}
\end{minipage}
\hfill
\begin{minipage}{0.48\linewidth}
\centering
\includegraphics[width=\linewidth]{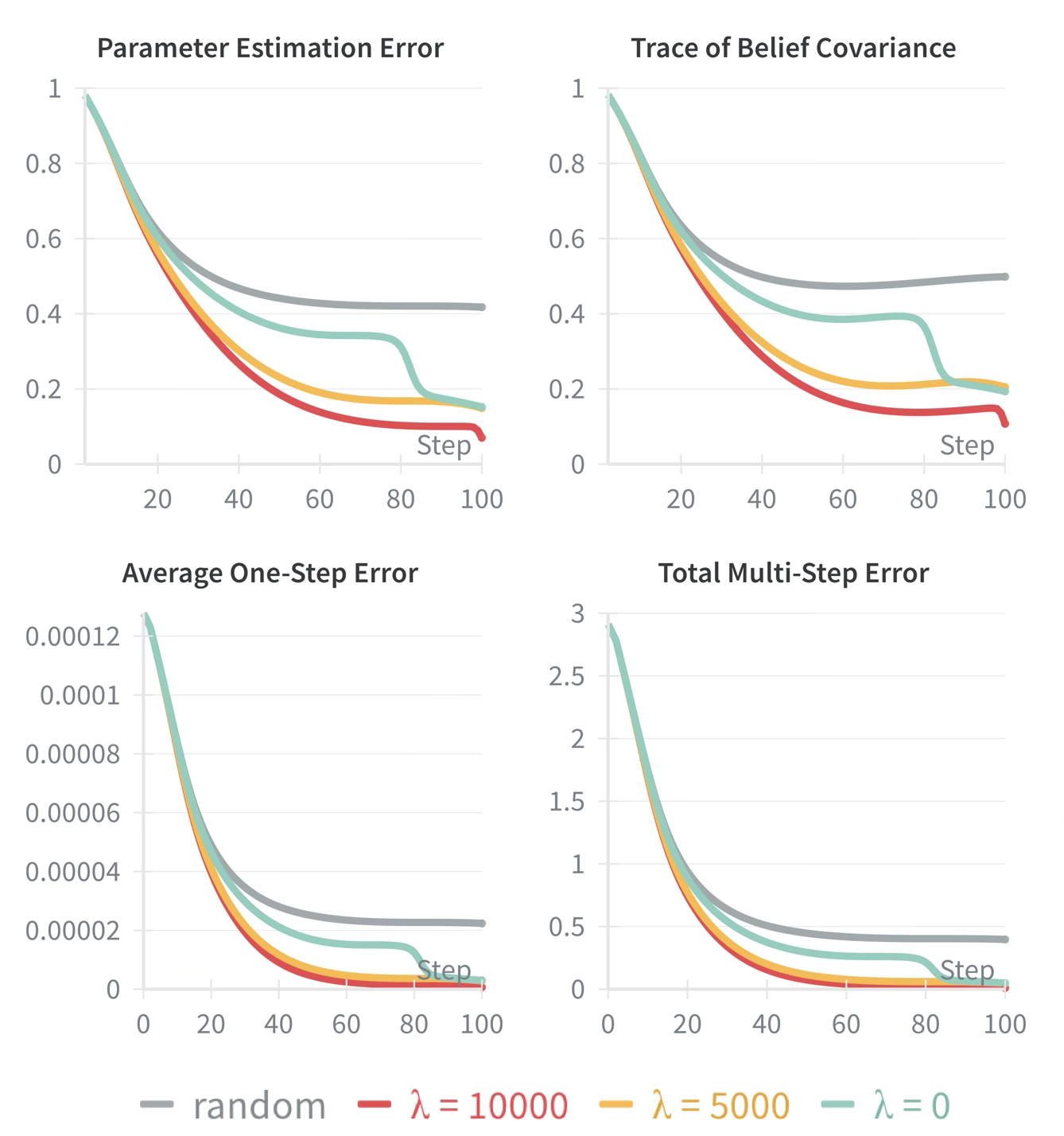}
\caption{PE with differentiable MPC}
\label{fig:results_unicycle}
\end{minipage}
\end{minipage}
\vspace{-30pt}
\end{figure}

\textbf{Experiment 1: Single-agent linear system.} 
We consider an intentionally simplified setting in which a single agent operates under planar double integrator dynamics. The state is $\state_\timeindex = [p_x, p_y, v_x, v_y]^\top$ (position and velocity in 2D) and the control is $\control_\timeindex = [a_x, a_y]^\top$ (acceleration). The dynamics are $\state_{\timeindex+1} = \paramsA \state_{\timeindex} + \paramsB \control_{\timeindex}$ with unknown matrices $\paramsA \in \mathbb{R}^{4 \times 4}$ and $\paramsB \in \mathbb{R}^{4 \times 2}$. The task cost penalizes deviation from a goal state. Results are shown in \cref{fig:results_linear}.

\textbf{Experiment 2: Single-agent damped pendulum.} 
We consider a setting with more sophisticated nonlinear dynamics, in which a single agent controls a damped pendulum. The state is $\state_\timeindex = [\phi_\timeindex, \dot{\phi}_\timeindex]^\top$ (angle and angular velocity) and the control is $\control_\timeindex \in \mathbb{R}$ (applied torque). The dynamics are
\begin{equation*}
    \state_{\timeindex+1} = \begin{bmatrix}
        \phi_\timeindex + \dot{\phi}_\timeindex \Delta t \\
        \dot{\phi}_\timeindex + \ddot{\phi} \Delta t
    \end{bmatrix}
\end{equation*}
where $\ddot{\phi} = (\control_\timeindex - b\dot{\phi}_\timeindex - mgl\sin(\phi_\timeindex))/L$, with unknown damping coefficient $b$ and moment of inertia $L$. The task cost penalizes error in tracking a given reference angle. Results are shown in \cref{fig:results_pendulum}.

\textbf{Experiment 3: Two-agent pursuit-evasion with pursuer LQR policy.} 
We model a two-agent planar pursuit-evasion (PE) game where the pursuer follows an LQR policy with unknown cost matrices. Both agents have double integrator dynamics with states $\state_\timeindex^1, \state_\timeindex^2 \in \mathbb{R}^4$ (evader and pursuer positions/velocities) and controls $\control_\timeindex^1, \control_\timeindex^2 \in \mathbb{R}^2$ (accelerations). The full dynamics are
\begin{equation*}
    \state_{\timeindex+1} = \begin{bmatrix}
        \paramsA \state^1_\timeindex + \paramsB \control^1_\timeindex \\
        \paramsA \state^2_\timeindex + \paramsB \policy^2_\timeindex
    \end{bmatrix}
\end{equation*}
where the pursuer policy is $\policy^2_\timeindex = -(\paramsR + \paramsB^\top \paramsQ \paramsB)^{-1} \paramsB^\top \paramsQ (\paramsA \state^2_\timeindex - \state^1_\timeindex)$ with unknown (to agent 1) cost matrices $\paramsQ \in \mathbb{R}^{4 \times 4}$ and $\paramsR \in \mathbb{R}^{2 \times 2}$. The evader's task cost maximizes distance from the pursuer. Since dynamics for both agents are linear and assumed to be known, we can compute a closed-form LQR policy for the pursuer. Results are shown in \cref{fig:results_lqr}.

\textbf{Experiment 4: Two-agent pursuit-evasion with differentiable MPC policy.}
We extend the pursuit-evasion scenario to a pursuer with nonlinear unicycle dynamics. 
The evader maintains double integrator dynamics ($\state^1_\timeindex \in \mathbb{R}^4$, $\control^1_\timeindex \in \mathbb{R}^2$), while the pursuer has unicycle dynamics with state $\state^{\timeindex\top} = [p_{x,i}^2, p_{y,i}^2, \phi_i^2, v_i^2]$ (position, heading, speed) and control $\control^2_\timeindex = [\omega^2_i, a_i^2]^\top$ (angular velocity, acceleration). 
The pursuer dynamics are:
\begin{equation*}
    \state^2_{\timeindex+1} = \begin{bmatrix} 
        p_{x,\timeindex}^2 +  v_\timeindex^2 \cos(\phi_\timeindex^2)\Delta t \\ 
        p_{y,\timeindex}^2 +  v_\timeindex^2 \sin(\phi_\timeindex^2)\Delta t \\
        \phi_\timeindex^2 +  \omega_\timeindex^2 \Delta t\\
        v_\timeindex^2 +  a_\timeindex^2 \Delta t
    \end{bmatrix}
\end{equation*}
The pursuer policy is encoded by a nonlinear model predictive control problem with a cost parameter $w$ (unknown to the evader): $\policy^2_\timeindex = \arg\min_{\mathbf{u}} w\sum_{i=\timestart+1}^{\timestart + \horizon_\policy}\|\state^1_\timestart - \state^2_i\|^2 + \|\control_i\|^2$, implemented via differentiable optimization in \texttt{JAX} \cite{bradbury2018jax}. This experiment highlights how our framework can extend to a broad class of unknown dynamical models, including those with embedded optimization problems and games \cite{agrawal2019differentiable, amos2017optnet, blondel2022efficient, JMLR:TorchOpt, liu2023learning, liu2024auto, palafox2024smooth}. Results are shown in \cref{fig:results_unicycle}.

\subsection{Discussion}
Across all experiments in \cref{fig:results_linear,fig:results_pendulum,fig:results_lqr,fig:results_unicycle}, increasing the information-gathering weight $\lambda$ (yellow and red lines) reduces parameter estimation error and improves generalization to held-out data, compared to the random and passive learning baselines.
We remark that in Experiment 4 (\cref{fig:results_unicycle}), the passive learning trajectory ($\lambda = 0$) performs better than the random baseline. 
We conjecture this is because, in certain settings, local minima for the task and information-gathering costs lie in the same regions of state space. 
Therefore, passive learning results in enough excitation for accurate parameter identification.

The results confirm both of our hypotheses: active information gathering accelerates parameter learning and improves model generalization compared to passive observation.
The experiments also validate the claim in \cref{sec:contribution3} that the mutual information cost from \Cref{thm:mi_special_case} directly accelerates parameter learning.
Finally, they showcase how our framework applies across single-agent and multi-agent problems with both linear and nonlinear dynamics.

Our framework's modularity enabled rapid instantiation of these diverse experiments: we applied the same EKF belief updater and mutual information cost with linear dynamics, nonlinear dynamics, and embedded differentiable model predictive control modules encoding unknown opponent behavior, all without any problem-specific cost derivations.
This validates the practical utility of our formalization beyond theoretical generality.

\contribution[Experimental validation in single and multi-agent settings.]{
Our experiments showcase the practical utility of our framework by demonstrating information-gathering behavior in a variety of dynamical systems (single-agent, multi-agent, linear, non-linear) without problem-specific cost derivations.}

\section{Conclusion}

In this work, we present a modular framework for sequential decision making under dynamics uncertainty that decouples modeling choices (e.g., dynamics model, belief updaters, etc.) from the information-gathering cost by cleanly exposing the causal dependencies between parameters, beliefs, controls, and observations (\cref{sec:contribution1}). 
We leverage this framework to derive a general information-gathering cost based on directed information, which, in contrast with existing work, is agnostic to most modeling choices (\cref{sec:contribution2}).
In \Cref{thm:mi_special_case} we prove that under certain assumptions, this cost reduces to a mutual information cost that is commonly used in the literature;
we establish a theoretical equivalence between this mutual information cost and the information gain in linearized Bayesian estimation, providing theoretical justification for its use (\cref{sec:contribution3}).
Our experiments illustrate the practical and theoretical value of our work by showing how an instantiation of our proposed framework can replicate results in existing literature.

Collectively, our contributions offer a unified framework for information gathering under dynamics uncertainty. 
The causal structure we expose, and the directed information cost we derive from it, provides a principled foundation that practitioners can instantiate across diverse modeling choices.

\begin{credits}
\subsubsection{\ackname} We would like to thank Brett Barkley for formatting advice. This research was sponsored by the Army Research Laboratory under cooperative agreement number W911NF-25-2-0021, and by the National Science Foundation under grant number 2409535.

\end{credits}

\bibliographystyle{splncs04}
\bibliography{example}  

\end{document}